\newtheorem{theorem}{Theorem}[section]  
\title{Robust Variational Model Based Tailored UNet: Leveraging Edge Detector and Mean Curvature for Improved Image Segmentation}
\author
{ 
Kaili Qi\\
    Department of Mathematical Sciences, Tsinghua University \\
    Beijing 100084, China\\
\texttt{qkl21@mails.tsinghua.edu.cn}
\And
    Zhongyi Huang \thanks{Indicates Corresponding Author.}\\
     Department of Mathematical Sciences, Tsinghua University\\
     Beijing 100084, China\\
\texttt{zhongyih@mail.tsinghua.edu.cn}
\And
Wenli Yang \\
School of Mathematics, China University of Mining and Technology \\
Xuzhou 221116, China \\
\texttt{yangwl19@cumt.edu.cn}
}
\begin{document}
\maketitle

\begin{abstract}
	To address the challenge of segmenting noisy images with blurred or fragmented boundaries, this paper presents a robust version of Variational Model Based Tailored UNet (VM\_TUNet), a hybrid framework that integrates variational methods with deep learning. The proposed approach incorporates physical priors, an edge detector and a mean curvature term, into a modified Cahn–Hilliard equation, aiming to combine the interpretability and boundary-smoothing advantages of variational partial differential equations (PDEs) with the strong representational ability of deep neural networks. The architecture consists of two collaborative modules: an $F$ module, which conducts efficient frequency domain preprocessing to alleviate poor local minima, and a $T$ module, which ensures accurate and stable local computations, backed by a stability estimate. Extensive experiments on three benchmark datasets indicate that the proposed method achieves a balanced trade-off between performance and computational efficiency, which yields competitive quantitative results and improved visual quality compared to pure convolutional neural network (CNN) based models, while achieving performance close to that of transformer-based method with reasonable computational expense.
\end{abstract}

\keywords{ Image segmentation\and deep learning \and variational model \and UNet}

\section{Introduction}
Image segmentation is a core and essential task in image processing, which serves as a key component in many real-world applications. It is widely used in diverse fields such as video surveillance, satellite remote sensing, and medical image analysis \cite{Mudassar2025, PanZhuokun2020, Ravikumar2025}. The goal of this task is to assign a category label to every pixel in the input image, thereby isolating target objects and distinguishing them from the background through fine-grained partitioning. Despite significant advances in segmentation techniques, whether through classical algorithms or modern deep neural networks, accurately delineating objects in images affected by noise, ambiguous boundaries, or fragmentation remains a persistent and largely unresolved challenge.

Traditional models for image segmentation have largely relied on model-driven strategies, with variational methods being among the most prominent. These approaches frame segmentation as an energy minimization problem, where the objective integrates terms that encourage smooth boundaries, uniform regions, and proper edge localization. Such formulations produce geometrically consistent results with clear theoretical grounding. Even with the emergence of data-driven deep learning techniques, these classical energy-based methods remain valuable, offering interpretability and robust performance, especially in scenarios with limited training data or when strict control over topology and boundary smoothness is required. Traditional variational segmentation methods are generally divided into four categories according to the design and objectives of their energy functionals: region-based \cite{Chan2001,mumford1989}, boundary-based \cite{Stefaniak2025,WuXiangqiong2025}, shape-prior-based \cite{Rousson2002,Tsai2003}, and texture-based models \cite{Wangguodong2014,wangzhenzhou2016}. Ali, Rada, and Badshah \cite{Ali2018} proposed a novel variational level-set model for automated segmentation of fine image details, which robustly handles intensity inhomogeneity and noise through a denoising-constrained surface and a fidelity term. Niu et al. \cite{NIU2017} formulated a novel region-based active contour model that introduces a local similarity factor based on spatial distance and intensity difference to extract object boundaries under unknown noise. To reliably partition imagery contaminated by significant noise, Li et al. \cite{Li2019} introduced a point distance shape constraint and integrated it into a level set framework for the segmentation of objects with diverse shapes. Recently, Jiang et al. \cite{JIANG2026} designed an active contour model utilizing frequency domain information that employs low-frequency data for robust contour initialization and high-frequency components as energy functional weights, significantly improving segmentation accuracy, efficiency, and stability in noisy medical vessel images. However, the high computational cost of variational models precludes their real-time use for segmenting large or noisy images, despite the availability of efficient solvers. Additionally, variational image segmentation methods suffer from sensitivity to initialization, susceptibility to local minima, parameter tuning challenges, vulnerability to noise and intensity inhomogeneity, limited ability to handle complex scenes, and increased complexity when extended to multi-class or high-dimensional data.

With the accelerated advancement of GPU technology, an increasing array of data-driven convolutional neural networks has emerged in recent years for the task of image segmentation. These models have demonstrated remarkable results, largely attributable to their enhanced capacity for feature representation \cite{Asgari2021,Yangsean2024,Lixiaomeng2018}. For example, Ronneberger, Fischer, and Brox \cite{Ronneberger2015} proposed UNet, a pioneering convolutional network architecture featuring a symmetric encoder-decoder structure with skip connections for precise biomedical image segmentation. Following the success of the original UNet architecture, numerous variants such as Segnet, UNet++ and DeepLabV3+ have emerged, building upon its foundation to achieve significant improvements in image segmentation performance \cite{Badrinarayanan2017,Chen2018,Zhou2018}. Chen et al. \cite{Chenjieneng} designed TransUNet, a pioneering hybrid architecture that leverages global context modeling of vision transformers (ViTs) and the precise localization of UNet for superior medical image segmentation. Subsequently, Swin-UNet \cite{Hu2022} is a pure transformer-based U-shaped architecture that utilizes transformer blocks in both encoder and decoder for medical image segmentation, as it effectively captures hierarchical features. Recently, Segment Anything Model (SAM) \cite{ZHANG2024} is also a transformer-based general-purpose segmentation model, distinct from CNNs. It aims for zero-shot generalization and prompt-driven segmentation, which diverges fundamentally from CNN’s local feature extraction approach. However, SAM suffers from high computational costs due to its large parameter count, and may produce inaccurate masks with jagged edges or lost details for blurry, complex, or sub-pixel structures. Although most CNN-based and transformer-based methods perform competently across a broad range of segmentation tasks, they occasionally produce anomalous results, such as isolated outliers, when segmenting images with significant noise, particularly under conditions of limited training data.

In order to combine the advantages of traditional variational methods and deep learning approaches in the field of image segmentation, many hybrid methods have been proposed recently. Tai, Liu and Chan \cite{Tai2024} proposed PottsMGNet, a multigrid unsupervised neural network that integrates the classical two-phase Potts model as a regularizer to achieve remarkable performance for image segmentation. Similarly, based on the Potts model, Liu et al. \cite{Liu2024} proposed the Double-well Net (DN), which likewise integrates the double-well functional into a deep neural network to achieve more precise and reliable segmentation results. Further, inverse evolution layers (IELs) are novel neural modules that integrate the concept of inverse problems from mathematical physics into deep learning, which enables networks to implicitly learn and reverse evolutionary processes for enhanced output stability and accuracy \cite{Liu2025}. Recently, Jin et al. \cite{Jin2024} proposed a regularized CNN that integrates a geodesic active contour (GAC) model and a learned edge predictor into an end-to-end network for robust and interpretable segmentation of low-contrast and noisy images. Zhang et al. \cite{Zhang2025} designed LMS-Net, a deep unfolding network that integrates a learned Mumford-Shah model with prototype learning to enhance interpretability and structural accuracy in few-shot medical image segmentation. However, in noisy practical scenarios, both traditional variational and hybrid methods struggle to preserve subtle boundaries due to the limitations of energy functionals corresponding to low-order PDEs.

In this paper, inspired by previous studies \cite{Han2014,Liu2024,WEN2022,Zhuwei2012} and to effectively leverage variational method's benefits of strong mathematical interpretability and its capacity to preserve elongated, fragmented, or blurred boundaries under noisy conditions, we propose a novel segmentation network. Our main contributions are as follows.
\begin{itemize}
	\item Our model combines variational methods and deep learning by incorporating physical priors, an edge detector and a mean curvature term. Instead of relying purely on data-driven learning, it integrates the interpretability of variational PDEs with the representation capacity of deep networks. This hybrid design retains both the feature learning strength of deep learning and beneficial variational properties, including transparency, boundary smoothness, and noise robustness.
	\item $F$ and $T$ modules in the model work collaboratively, with the former dedicated to efficient preprocessing and the latter to accurate and stable local computation. Together, they establish a solid foundation for the final segmentation outcome. The $F$ module operates in the frequency domain, which improves computational efficiency, helps avoid suboptimal local minima and provides a better initial state for subsequent optimization. The $T$ module contributes improved local accuracy, stability, and robustness, where a conditional numerical stability theorem is provided along with its proof, offering theoretical support for the reliability of the algorithm.
	\item Extensive experiments on three different datasets with varying Gaussian noise levels demonstrate the feasibility and robustness of the proposed method. With parameter count and per-epoch runtime remaining within a comparable range, our approach achieves a reasonable balance between performance and efficiency. It also yields competitive quantitative results and visual quality, which performs comparably to transformer-based Swin-UNet while showing improvements over pure CNN models.
\end{itemize}
The rest of this paper is organized as follows. Our model and its corresponding algorithm are shown in Section~\ref{main}. The datasets and experimental results are presented in Section ~\ref{experiments}. Finally, we draw a conclusion in Section ~\ref{conclusions}.

\section{Model and algorithm}\label{main}
Building upon the work of Bertalmio et al. \cite{bertalmio2000}, we propose a modified Cahn-Hilliard equation featuring a weight term and a mean curvature term for noisy image segmentation
{\small
	\begin{equation}\label{original}
		\dfrac{\partial u}{\partial t}=-\Delta\big[\varepsilon\nabla\cdot\left(g\nabla u\right)-\dfrac{2}{\varepsilon}W^{\prime}(u)\big]-\lambda\big[u(f-c_1)^2-(1-u)(f-c_2)^2\big]+\mu\nabla\cdot\left(\dfrac{\nabla u}{|\nabla u|}\right),
\end{equation}}
which is equivalent to
\begin{equation}\label{original model}
	\begin{split}
		\dfrac{\partial u}{\partial t}&=-\Delta\bigg(\varepsilon\nabla\cdot\left(g\nabla u\right)\bigg)+\mu\nabla\cdot\bigg(\dfrac{\nabla u}{|\nabla u|}\bigg)\\
		&-\lambda\big[u(f-c_1)^2-(1-u)(f-c_2)^2\big]+\dfrac{2}{\varepsilon}\Delta\big(W^{\prime}(u)\big).
	\end{split}
\end{equation}
This equation is subject to the following boundary conditions
\begin{equation}\label{boundary conditions}
	\dfrac{\partial u}{\partial \boldsymbol{n}}=\dfrac{\partial\big(\nabla\cdot(g(\nabla f)\nabla u)\big)}{\partial \boldsymbol{n}}=0,
\end{equation}
where $f:\Omega\subset\mathbb{R}^2\rightarrow \mathbb{R}$ is the input image and $\Omega$ denotes the image domain. The parameters $\varepsilon, \lambda, \mu>0$ control the behavior of the model. $W(t)=(t^2-1)^2$ is a Lyapunov functional, and $u$ is an evolving phase-field function whose steady-state solution defines the segmentation. The operators $\nabla$ and $\Delta$ represent the gradient and Laplacian, respectively. The function $g(\nabla f) = 1/(1+\beta|\nabla f|^2)$, parameterized by $\beta > 0$, serves as an edge detector that diminishes near image boundaries. For the implementation of $g(\nabla f)$, we consider the finite difference method, specifically using central differencing at non-boundary points and first-order differencing at boundary points. $\nabla\cdot\left(\nabla u/|\nabla u|\right)$ is the mean curvature term which is specifically designed to handle noise. $c_1$ and $c_2$ are the average intensities inside and outside the target region. Inspired by \cite{Caselles1997,wang2022}, we integrate the edge detector and the mean curvature term to handle the problem of noisy image segmentation.

The remainder of this section is structured as follows. Subsection~\ref{FSM} presents the Fourier spectral method (FSM) and implementation details. Subsection~\ref{TFPM} describes the tailored finite point method (TFPM), including implementation and an analysis of stability. Finally, a detailed introduction to the robust version of VM\_TUNet is provided in Subsection~\ref{robust}.

\subsection{Fourier spectral method}\label{FSM}
For $\Omega=[0,L_x]\times[0,L_y]$, we discretize it into $N_x\times N_y$ grid points and the grid spacing and coordinates are $\Delta x=L_x/(N_x-1),\text{ }\Delta y=L_y/(N_y-1)$ and $x_{i}=i\Delta x,\text{ }y_{j}=j\Delta y,\text{ }i=0,\ldots,N_x-1,\text{ }j=0,\ldots,N_y-1$. Employing the Fourier method inherently assumes the application of periodic boundary conditions during numerical implementation, which is also a approach frequently adopted in image processing contexts
\begin{equation}\notag
	u(0,y)=u(L_x,y),\textbf{ }0\leq y\leq L_y;\textbf{ }u(x,0)=u(x,L_y),\textbf{ }0\leq x\leq L_x.
\end{equation}
The corresponding frequency-domain wavenumbers in both directions are precomputed during initialization to facilitate subsequent computations
\begin{equation}\notag
	k_x[m]=\dfrac{2\pi}{L_x}\cdot
	\begin{cases}
		m,&m=0,1,\ldots,\lfloor\dfrac{N_x}{2}\rfloor,\\
		m-N_x,&m=\lfloor\dfrac{N_x}{2}\rfloor+1,\ldots,N_x-1,
	\end{cases}
\end{equation}
and
\begin{equation}\notag
	k_y[n]=\dfrac{2\pi}{L_y}\cdot
	\begin{cases}
		n,&n=0,1,\ldots,\lfloor\dfrac{N_y}{2}\rfloor,\\
		n-N_y,&n=\lfloor\dfrac{N_y}{2}\rfloor+1,\ldots,N_y-1,
	\end{cases}
\end{equation}
and the squared magnitude of the wave number is given by $|\mathbf{k}[m,n]|^2=(k_x[m])^2+(k_y[n])^2$.

Given a discrete function $f(x,y)$, the forward discrete Fourier transform (FDFT) is defined as
\begin{equation}\notag
	\widehat{f}[k_x,k_y]=\mathcal{F}\{f\}[k_x,k_y]=\sum\limits_{i=0}^{N_x-1}\sum\limits_{j=0}^{N_y-1}f(x_i,y_j)\exp\{-2\pi \mathbf{i}\bigg(\dfrac{k_x i}{N_x}+\dfrac{k_y j}{N_y}\bigg)\},
\end{equation} and the inverse discrete Fourier transform (IDFT) is defined as
\begin{equation}\notag
	f(x_i,y_j)=\mathcal{F}^{-1}\{\widehat{f}\}(x_i,y_j)=\dfrac{1}{N_xN_y}\sum\limits_{k_x=0}^{N_x-1}\sum\limits_{k_y=0}^{N_y-1}\widehat{f}[k_x,k_y]\exp\{2\pi \mathbf{i}\bigg(\dfrac{k_x i}{N_x}+\dfrac{k_y j}{N_y}\bigg)\},
\end{equation}where $\mathbf{i}$ denotes the imaginary unit. For the gradient operator, we consider 
\begin{equation}\notag
	\nabla u=\bigg(\dfrac{\partial u}{\partial x},\dfrac{\partial u}{\partial y}\bigg)=\big(\mathcal{F}^{-1}\{\mathbf{i}k_x\widehat{u}\},\mathcal{F}^{-1}\{\mathbf{i}k_y\widehat{u}\}\big).
\end{equation}For the divergence operator, we consider the vector field $\mathbf{v}=(v_x,v_y)$ and then we have
\begin{equation}\notag
	\nabla\cdot\mathbf{v}=\dfrac{\partial v_x}{\partial x}+\dfrac{\partial v_y}{\partial y}=\mathcal{F}^{-1}\{\mathbf{i}(k_x\widehat{v_x}+k_y\widehat{v_y})\}.
\end{equation}
For the Laplacian operator, we consider 
\begin{equation}\notag
	\Delta u=\dfrac{\partial^2 u}{\partial x^2}+\dfrac{\partial^2 u}{\partial y^2}=\mathcal{F}^{-1}\{-|\mathbf{k}|^2\widehat{u}\}.
\end{equation}

For the first term of \eqref{original model}, i.e., the diffusion term with each time step $\Delta t=t^{n+1}-t^{n}$ and $u(t^n)=u^n,\text{ }u^0=f$. Let $A=\nabla\cdot(g\nabla u^n)$, then 
\begin{equation}\notag
	\widehat{A}= \mathbf{i} \left[ k_x \cdot \mathcal{F} \{ g(\nabla f) \cdot \mathcal{F}^{-1}\{\mathbf{i} k_x \widehat{u}^n\} \} + k_y \cdot \mathcal{F} \{ g(\nabla f) \cdot \mathcal{F}^{-1}\{\mathbf{i} k_y \widehat{u}^n\} \} \right],
\end{equation} 
and $\mathcal{F}\{\Delta A\}=-|\mathbf{k}|^2\widehat{A}$. For the second term of \eqref{original model}, i.e., the mean curvature term, let $B=\nabla\cdot\left(\nabla u^n/(|\nabla u^n|+\delta)\right)$ with $\delta=10^{-8}$, then
\begin{equation}\notag
	\begin{split}
		\widehat{B} &= \mathbf{i} k_x \cdot \mathcal{F} \{ \frac{ \mathcal{F}^{-1}\{\mathbf{i} k_x \widehat{u}^n\} }{ \sqrt{ (\mathcal{F}^{-1}\{\mathbf{i} k_x \widehat{u}^n\})^2 + (\mathcal{F}^{-1}\{\mathbf{i} k_y \widehat{u}^n\})^2 }+ \delta }  \} \\
		&+\mathbf{i}k_y \cdot \mathcal{F} \{ \frac{ \mathcal{F}^{-1}\{\mathrm{i} k_y \widehat{u}^n\} }{ \sqrt{ (\mathcal{F}^{-1}\{\mathbf{i} k_x \widehat{u}^n\})^2 + (\mathcal{F}^{-1}\{\mathbf{i} k_y \widehat{u}^n\})^2 }+ \delta }  \},
	\end{split}
\end{equation} 
The transition state update in the frequency domain is performed as follows
\begin{equation}\notag
	\widehat{u}^{(1)}=\widehat{u}^n+\Delta t\big[\varepsilon|\mathbf{k}|^2\widehat{A} + \mu\widehat{B}\big].
\end{equation}
For the treatment of data fidelity item of nonlinear terms, we denote it by \[D=-\lambda\left[u^n(f-c_1^n)^2-(1-u^n)(f-c_2^{n})^2\right],\] and we update two parameters $c_1$ and $c_2$ by
\begin{equation}\notag
	c_1^n=\dfrac{\sum u^n\cdot f}{\sum u^n+\delta},\textbf{ }c_2^n=\dfrac{\sum(1-u^n)\cdot f}{\sum(1-u^n)+\delta}.
\end{equation}For the Lyapunov functional term $W(u^n)$, and we have \[P=\dfrac{2}{\varepsilon}\Delta\left(W^{\prime}(u^n)\right)=\dfrac{2}{\varepsilon}\mathcal{F}^{-1}\{-|\mathbf{k}|^2\mathcal{F}\{W^{\prime}(u^n)\}\}.\] Overall update in the frequency domain is performed as follows
\begin{equation}\notag
	\begin{aligned}
		\widehat{u}^{n+1}&=\widehat{u}^{(1)}+\Delta t\mathcal{F}\{D+P\},\\
		u^{n+1}&=\mathcal{F}^{-1}\{\widehat{u}^{n+1}\},\\
		u^{n+1}&=\max\left(0,\min(1,u^{n+1})\right).
	\end{aligned}
\end{equation}
The algorithm terminates, whose output is denoted by $u^{final}$, when the relative $L^2$-norm change of the iterative solution, i.e., $\|u^{n+1}-u^{n}\|_{2}/\|u^{n}\|_{2}$ falls below a preset tolerance $tol$ or when the number of iterations $n$ exceeds the specified maximum iteration count $n_{\max}$. For details of the Fourier spectral method, see Algorithm~\ref{FSMalgorithm}.
\begin{algorithm}[htbp]
	\caption{Fourier Spectral Method for the Modified Cahn-Hilliard Equation}
	\label{FSMalgorithm}
	\begin{algorithmic}
		\REQUIRE $f(x,y),\text{ }g(\nabla f),\text{ }\varepsilon, \lambda, \mu, \beta>0$,
		\STATE \textbf{Initialization:}
		\STATE $x_i \gets i\Delta x$, $y_j \gets j\Delta y,\text{ }n \gets 0,\textbf{ }\widehat{u}^0 \gets \mathcal{F}\{f\}$ and $k_x[m], k_y[n], |\mathbf{k}[m,n]|^2$
		
		\WHILE{$n < n_{\text{max}}$ and $\|u^{n+1} - u^n\|_2/\|u^n\|_2 > \text{tol}$}
		
		\STATE Diffusion Term: $A \gets \nabla \cdot (g(\nabla f) \nabla u^n),\text{ }\widehat{A} \gets \mathcal{F}\{A\}$
		\STATE Curvature Term: $B \gets \nabla \cdot \left( \dfrac{\nabla u^n}{|\nabla u^n| + \delta} \right),\text{ }\widehat{B} \gets \mathcal{F}\{B\}$
		\STATE Transition State: $\widehat{u}^{(1)} \gets \widehat{u}^n + \Delta t \left( \varepsilon |\mathbf{k}|^2 \widehat{A} + \mu\widehat{B} \right)$
		\STATE Parameter Update: $c_1^n \gets \dfrac{\sum u^n\cdot f}{\sum u^n + \delta},\text{ }c_2^n \gets \dfrac{\sum (1-u^n)\cdot f}{\sum (1-u^n) + \delta}$
		\STATE Data Fidelity: $D \gets -\lambda \left[ u^n(f-c_1^n)^2 - (1-u^n)(f-c_2^n)^2 \right]$
		\STATE Lyapunov Functional:  $P \gets \dfrac{2}{\varepsilon} \mathcal{F}^{-1}\{-|\mathbf{k}|^2 \mathcal{F}\{W'(u^n)\}\}$
		
		\STATE $\widehat{u}^{n+1} \gets \widehat{u}^{(1)} + \Delta t \mathcal{F}\{D+P\},\text{ }u^{n+1} \gets \mathcal{F}^{-1}\{\widehat{u}^{n+1}\},\text{ }u^{n+1} \gets \max(0, \min(1, u^{n+1}))$
		\STATE $n \gets n + 1$
		\ENDWHILE
		\STATE \textbf{Return} Final Segmentation $u^{\text{final}} \gets u^{n+1}$
	\end{algorithmic}
\end{algorithm}

\subsection{Tailored finite point method}\label{TFPM}
We proceed to consider a reduced form of original Cahn-Hilliard equation, obtained by eliminating select complex components from its full formulation, under consideration that FSM serves as a preprocessing mechanism for edge detection and noise reduction, i.e.,
\begin{equation}\label{wang}
	\dfrac{\partial u}{\partial t}=-\Delta\left(\varepsilon\Delta u-\dfrac{2}{\varepsilon}W^{\prime}(u)\right)-\lambda\big[u(f-c_1)^2-(1-u)(f-c_2)^2\big].
\end{equation}
Let $ v=\varepsilon\Delta u-\dfrac{2}{\epsilon}W^{\prime}(u)$, then we have
\begin{equation}\label{ouhe}
	\begin{cases}
		v=\varepsilon\Delta u-\dfrac{2}{\epsilon}W^{\prime}(u),\\
		u_t=-\Delta v-\lambda\big[u(f-c_1)^2-(1-u)(f-c_2)^2\big],
	\end{cases}
\end{equation}
where $\partial u/\partial \boldsymbol{n}=\partial v/\partial \boldsymbol{n}=0$. Since small parameter $\varepsilon$ appears only in the first equation, we focus on formulating the TFPM scheme for it, which first requires linearizing the nonlinear term $W^{\prime}(u)$
\begin{equation}\notag
	W^{\prime}(u)=4u^3-4u=(4u^2+2)u-6u,
\end{equation}
where $4u^2+2$ and $-6u$ can be considered as constants locally. Furthermore, we can use the following scheme to discretize \eqref{ouhe}, which is as follows
\begin{equation}\label{fenxi}
	\begin{cases}
		\dfrac{u^{n+1}-u^n}{\tau}=-\Delta v^{n+1}-\lambda\left[u^{n+1}\left(f-c_1\right)^2-(1-u^{n+1})\left(f-c_2\right)^2\right],\\
		v^{n+1}=\varepsilon\Delta u^{n+1}-\dfrac{2}{\varepsilon}\left[(4(u^n)^2+2)u^{n+1}-6u^{n}\right],
	\end{cases}
\end{equation}
where each time step is $\tau=t^{n+1}-t^n$ with $u(t^n)=u^n,\textbf{ }u^{0}=\mathrm{Sig}(W^0*u^{final}+b^0)$. $\mathrm{Sig}$ represents a sigmoid function, $W^0$ is set as a convolutional kernel and $b^0$ is a bias term. After linearization, the coefficients of part $W^{\prime}(u)$ are in explicit format, and the rest are in implicit format. In order to capture some boundary and/or interior layers, we select some special basis functions to interpolate $u$. Let $\Delta x=\Delta y=h$ and $u_{ij}^{n}=u^n(ih,jh)$, then the approximate solution $u_h$ to the reduced equation satisfies
\begin{equation}\notag
	u_h(x,y)\in\dfrac{\dfrac{12u_{ij}^{n}}{\varepsilon}-v_{ij}^{n+1}}{\dfrac{4(2(u_{ij}^{n})^2+1)}{\varepsilon}}+\mathrm{span}\{e^{\xi x},e^{-\xi x},e^{\xi y},e^{-\xi y}\},
\end{equation}
where 
\begin{equation}\notag
	\xi=\dfrac{2\sqrt{2(u_{ij}^{n})^2+1}}{\varepsilon}.
\end{equation}
Due to the characteristics of image processing problems, we can actually set the spatial step size to $h=1$. Following \cite{yang2019}, one can easily get
\begin{equation}\notag
	u_{x,i+\frac{1}{2},j}=\xi\dfrac{u_{i+1,j}-u_{ij}} {e^{\frac{\xi h}{2}}-e^{-\frac{\xi h}{2}}},\textbf{ }u_{y,i,j+\frac{1}{2}}=\xi\dfrac{u_{i,j+1}-u_{ij}} {e^{\frac{\xi h}{2}}-e^{-\frac{\xi h}{2}}}.
\end{equation}
Then TFPM scheme of the second equation in \eqref{fenxi} is
\begin{equation}\label{first}
	\begin{split}
		&\dfrac{4}{\varepsilon}\left(2\left(u_{ij}^{n}\right)^2+1\right)u_{ij}^{n+1}-\dfrac{\varepsilon\xi}{h\left(e^{\frac{\xi h}{2}}-e^{-\frac{\xi h}{2}}\right)}\\
		&\cdot\left(u_{i+1,j}^{n+1}+u_{i-1,j}^{n+1}+u_{i,j+1}^{n+1}+u_{i,j-1}^{n+1}-4u_{i,j}^{n+1}\right)
		+v_{ij}^{n+1}=\dfrac{12}{\varepsilon}u_{ij}^{n},
	\end{split}
\end{equation}
Let $H_1(f)=(f-c_1)^2+(f-c_2)^2$ and $H_2(f)=(f-c_2)^2$, then $\lambda\big[u(f-c_1)^2-(1-u)(f-c_2)^2\big]=\lambda\left(u\cdot H_1-H_2\right)$. Thus, we have
\begin{equation}\label{second}
	\dfrac{u^{n+1}-u^n}{\tau}=-\Delta v^{n+1}-\lambda\left(u^{n+1}\cdot H_1-H_2\right),
\end{equation}
where $\Delta v^{n+1}$ can also be computed by $\Delta v_{ij}^{n+1}=v_{i+1,j}^{n+1}+v_{i-1,j}^{n+1}+v_{i,j+1}^{n+1}+v_{i,j-1}^{n+1}-4v_{ij}^{n+1}$. For the above two equations \eqref{first} and \eqref{second}, we can use biconjugate gradient stabilized (BICGSTAB) method \cite{BiCGSTAB} to solve them by stacking unknown variables along the channel dimension. After evolving the system within the time domain $[0, T]$, we take the state $u_T=u(x, T)$ at the final time $T$ to be our segmentation function. Denote $\left\langle\cdot,\cdot\right\rangle$ as the inner product corresponding to the $L^2$ norm. The following is a stability estimate for the TFPM computational scheme.

\begin{theorem}[Stability Estimate]\label{thm}
	Assuming that $W^{\prime\prime}(u^{n})\leq K$ holds, where $K$ is some positive constant, then when the time step $\tau$ and parameters $\varepsilon,\ \lambda$ satisfy some conditions, the boundedness of $u^{n+1}$ and $\Delta u^{n+1}$ is controlled by the boundedness of $u^n$ and $\Delta u^{n}$ on the interval $[0,T]$ with bounded $u^{0}$ and $\Delta u^{0}$.
	
\end{theorem}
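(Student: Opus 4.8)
The plan is to carry out a discrete energy (a priori) estimate on the semi-implicit scheme \eqref{fenxi}, testing it in the $L^2$ inner product $\langle\cdot,\cdot\rangle$ against $u^{n+1}$ and exploiting the summation-by-parts (discrete Green) identities supplied by the homogeneous Neumann conditions $\partial u/\partial\boldsymbol{n}=\partial v/\partial\boldsymbol{n}=0$. First I would pair the update equation $u^{n+1}-u^n=-\tau\Delta v^{n+1}-\tau\lambda(u^{n+1}H_1-H_2)$ from \eqref{second} with $u^{n+1}$ and apply the polarization identity $\langle u^{n+1}-u^n,u^{n+1}\rangle=\tfrac12(\|u^{n+1}\|^2-\|u^n\|^2+\|u^{n+1}-u^n\|^2)$, so the left-hand side immediately yields the target quantity $\|u^{n+1}\|^2$ together with a nonnegative numerical-dissipation term $\tfrac12\|u^{n+1}-u^n\|^2$.

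On the right-hand side I would move the Laplacian onto $u^{n+1}$, writing $\langle\Delta v^{n+1},u^{n+1}\rangle=\langle v^{n+1},\Delta u^{n+1}\rangle$, and substitute $v^{n+1}=\varepsilon\Delta u^{n+1}-\tfrac{2}{\varepsilon}[(4(u^n)^2+2)u^{n+1}-6u^n]$. The leading piece contributes the crucial dissipative term $-\varepsilon\tau\|\Delta u^{n+1}\|^2$, which after transfer to the left supplies control of $\|\Delta u^{n+1}\|$. Three remaining terms must then be estimated: (i) the linearized-potential cross term $\tfrac{2\tau}{\varepsilon}\langle(4(u^n)^2+2)u^{n+1},\Delta u^{n+1}\rangle$, whose coefficient is uniformly bounded because the hypothesis $W''(u^n)\le K$, i.e.\ $12(u^n)^2-4\le K$, forces $4(u^n)^2+2\le M:=\tfrac{K+10}{3}$; (ii) the mixed term $-\tfrac{12\tau}{\varepsilon}\langle u^n,\Delta u^{n+1}\rangle$, which I would integrate by parts once more into $-\tfrac{12\tau}{\varepsilon}\langle\Delta u^n,u^{n+1}\rangle$ so that $\|\Delta u^n\|$ surfaces on the right; and (iii) the fidelity term, which splits as $-\tau\lambda\langle H_1 u^{n+1},u^{n+1}\rangle+\tau\lambda\langle H_2,u^{n+1}\rangle$, where the first part is $\le 0$ since $H_1=(f-c_1)^2+(f-c_2)^2\ge 0$ and the second is a bounded data forcing controlled through $\|H_2\|^2$. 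Each cross term is then split by Young's inequality so that a small multiple of $\|\Delta u^{n+1}\|^2$ is reabsorbed into the dissipation while the remainder falls on $\|u^{n+1}\|^2$, $\|u^n\|^2$ and $\|\Delta u^n\|^2$; concretely, (i) yields $\tfrac{\varepsilon\tau}{4}\|\Delta u^{n+1}\|^2+\tfrac{4\tau M^2}{\varepsilon^3}\|u^{n+1}\|^2$ and (ii) yields $\tfrac{6\tau}{\varepsilon}(\|\Delta u^n\|^2+\|u^{n+1}\|^2)$.

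Collecting terms produces a recursion of the form $(\tfrac12-\tau C_1)\|u^{n+1}\|^2+\tfrac{3\varepsilon\tau}{4}\|\Delta u^{n+1}\|^2\le\tfrac12\|u^n\|^2+\tfrac{6\tau}{\varepsilon}\|\Delta u^n\|^2+\tfrac{\tau\lambda}{2}\|H_2\|^2$, where $C_1=\tfrac{\lambda}{2}+\tfrac{4M^2}{\varepsilon^3}+\tfrac{6}{\varepsilon}$ depends only on $\varepsilon,\lambda,K$ and the fixed data. The conditions on $\tau,\varepsilon,\lambda$ advertised in the statement are exactly those making $\tfrac12-\tau C_1>0$, so that the $\|u^{n+1}\|^2$ contribution is absorbable while the coefficient of $\|\Delta u^{n+1}\|^2$ stays strictly positive after the Young absorptions; a sufficient such condition is $\tau\big(\tfrac{\lambda}{2}+\tfrac{4M^2}{\varepsilon^3}+\tfrac{6}{\varepsilon}\big)<\tfrac12$. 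Normalizing this one-step inequality and iterating it over the $N=T/\tau$ steps by a discrete Gronwall argument, started from the bounded initial data $u^0,\Delta u^0$, then delivers the claimed uniform control of $\|u^{n+1}\|$ and $\|\Delta u^{n+1}\|$ on $[0,T]$. I expect the main obstacle to be steps (i)–(ii): keeping the coefficient of $\|\Delta u^{n+1}\|^2$ strictly positive after the Young splittings is possible only because $W''(u^n)\le K$ caps the linearized-potential coefficient and because $\tau$ is taken small relative to $\varepsilon$ and $\lambda$; a secondary technical point is verifying that the exponential-basis TFPM operators in \eqref{first} inherit the discrete Green identities used above, consistently with the Neumann conditions.
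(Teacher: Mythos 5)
Your proposal follows essentially the same route as the paper's proof: test the semi-implicit scheme \eqref{fenxi} against $u^{n+1}$ in $L^2$, integrate by parts to extract the dissipative term $\varepsilon\|\Delta u^{n+1}\|^2$, use $W''(u^n)\le K$ to cap the linearized-potential coefficient, split the cross terms by Young's inequality, and impose positivity conditions on the resulting coefficients of $\|u^{n+1}\|^2$ and $\|\Delta u^{n+1}\|^2$ in terms of $\tau,\varepsilon,\lambda$. The only differences are cosmetic (you bound $4(u^n)^2+2$ as a single constant where the paper splits off a nonpositive $-\tfrac{12}{\varepsilon}\|\nabla u^{n+1}\|^2$ term) or small completions of what the paper leaves implicit (the explicit discrete Gronwall iteration over $[0,T]$).
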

\begin{proof}\label{proof}
	Multiplying the equation \eqref{fenxi} by $u^{n+1}$ and integrating over the area $\Omega$, we have 
	\begin{equation}
		\begin{split}
			&\dfrac{1}{\tau}\left(\|u^{n+1}\|^2-\left\langle u^n,\ u^{n+1}\right\rangle\right)+\varepsilon\|\Delta u^{n+1}\|^2
			=\dfrac{8}{\varepsilon}\left\langle u^{n+1}\left(\left(u^n\right)^2-1\right),\ \Delta u^{n+1}\right\rangle\\
			&+\dfrac{12}{\varepsilon}\left\langle u^{n+1},\ \Delta u^{n+1}\right\rangle
			+\dfrac{12}{\varepsilon}\left\langle-\Delta u^{n},\ u^{n+1}\right\rangle+\lambda\left\langle H_2-u^{n+1}\cdot H_1,u^{n+1}\right\rangle.
		\end{split}
	\end{equation}
	By the Young's inequality and integration by part, and according to the definitions of $H_1,\ H_2$, there exists a positive constant $C_1$ such that
	\begin{equation}
		\begin{split}
			&\dfrac{1}{2\tau}\left(\|u^{n+1}\|^2-\|u^n\|^2\right)+\varepsilon\|\Delta u^{n+1}\|^2\leq\dfrac{8}{\varepsilon}\left\langle u^{n+1}\left(\left(u^n\right)^2-1\right),\ \Delta u^{n+1}\right\rangle\\
			&-\dfrac{12}{\varepsilon}\|\nabla u^{n+1}\|^{2}+\dfrac{6}{\varepsilon}(\|\Delta u^n\|^2+\| u^{n+1}\|^2)+\dfrac{C_1\lambda}{2}\left(\|f\|^2-\|u^{n+1}\|^2\right).
		\end{split}
	\end{equation}
	Under assumption $W^{\prime\prime}(u^{n})\leq K$, i.e., $(u^n)^2\leq(K+4)/12$, and by the Cauchy-Schwarz inequality and the Young's inequality, we have
	\begin{equation}
		\begin{split}
			\dfrac{8}{\varepsilon}\left\langle u^{n+1}\left(\left(u^n\right)^2-1\right),\ \Delta u^{n+1}\right\rangle&\leq \dfrac{8}{\varepsilon}\|u^{n+1}\|\cdot\|\Delta u^{n+1}\|\cdot \dfrac{K-8}{12}\\
			&\leq\dfrac{K-8}{3\varepsilon}\left(\gamma\|u^{n+1}\|^2+\dfrac{1}{\gamma}\|\Delta u^{n+1}\|^2\right),
		\end{split}
	\end{equation}
	where $\gamma>0$ is some positive constant and $K>8$. Then
	\begin{equation}
		\begin{split}
			&\dfrac{1}{2\tau}\left(\|u^{n+1}\|^2-\|u^n\|^2\right)+\varepsilon\|\Delta u^{n+1}\|^2\leq \dfrac{K-8}{3\varepsilon}\left(\gamma\|u^{n+1}\|^2+\dfrac{1}{\gamma}\|\Delta u^{n+1}\|^2\right)\\
			&+\dfrac{6}{\varepsilon}(\|\Delta u^n\|^2+\| u^{n+1}\|^2)+C(\Omega,\ f,\ C_1)-\dfrac{C_1\lambda}{2}\|u^{n+1}\|^2.
		\end{split}
	\end{equation}
	After sorting, we have
	\begin{equation}
		\begin{split}
			&	\left(\dfrac{1}{2\tau}-\dfrac{K-8}{3\varepsilon}\gamma-\dfrac{6}{\varepsilon}+\dfrac{C_1\lambda}{2}\right)\|u^{n+1}\|^2+\left(\varepsilon-\dfrac{K-8}{3\varepsilon\gamma}\right)\|\Delta u^{n+1}\|^2\\
			&\leq\dfrac{1}{2\tau}\|u^{n}\|^2+\dfrac{6}{\varepsilon}\|\Delta u^n\|^2+C(\Omega,\ f,\ C_1).
		\end{split}
	\end{equation}
	where when $\dfrac{1}{2\tau}-\dfrac{K-8}{3\varepsilon}\gamma-\dfrac{6}{\varepsilon}+\dfrac{C_1\lambda}{2}>0$ and $\varepsilon-\dfrac{K-8}{3\varepsilon\gamma}>0$, the proof is complete.
\end{proof}

\subsection{Robust VM\_TUNet}\label{robust}
To further improve the performance of the VM\_TUNet network, we incorporate $F$ module and $T$ module. $F$ module, constructed from a series of $B_F$ blocks and grounded in the FSD provides computational efficiency, features an algorithmically simple implementation and helps avoid poor local minima in subsequent optimization or learning processes. Simultaneously, $T$ module, implemented as the TFPM module and containing a sequence of $B_T$ blocks, delivers better local accuracy, improved stability, and robustness. $T$ module processes input image $f$ through an operator $H(f)$, which is formed by combining two UNet-like subnetworks specified by a channel vector $\boldsymbol{c}=[c_1,\ldots,c_S]$ for some positive integers $\{c_s\}_{s=1}^{S}$, where $S$ denotes the number of resolution levels as in \cite{Liu2024}. That is, we represent $H_1(f)-H_2(f)=(f-c_1)^2$ and $H_2(f)=(f-c_2)^2$ by using two neural networks to automatically optimize two positive parameters $c_1$ and $c_2$, thereby avoiding manual tuning and facilitating the application of the BiCGSTAB solver. Overall, we have robust VM\_TUNet as the following formula
\begin{equation}\notag
	P(f)=\textrm{Sig}\left(W*T\left(\textrm{Sig}\left(F(f)\right),H(f)\right)+b\right),
\end{equation}
where we show the architecture of robust VM\_TUNet in Figure~\ref{VMEDMCTNet}.
\begin{figure}[ht]
	\centering\includegraphics[width=1\linewidth]{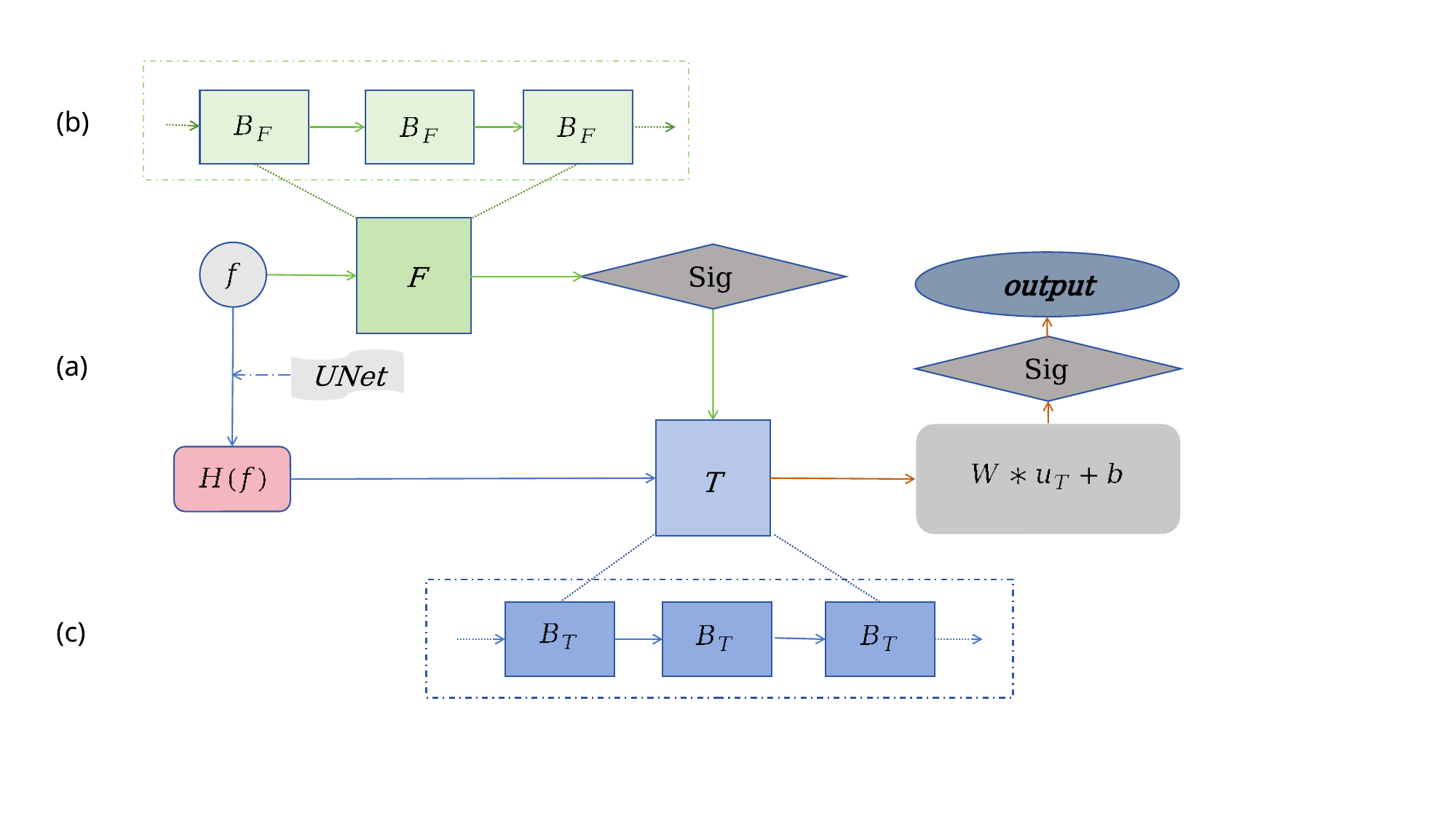}  
	\caption{ robust VM\_TUNet architecture. (a) Main process of robust VM\_TUNet with a convolutional layer followed by a sigmoid function at the end. (b) FSM blocks. (c) TFPM blocks.}  
	\label{VMEDMCTNet}  
\end{figure}
Suppose we are given a training set of images $\{f_{i}\}_{i=1}^{I}$ with their foreground-background segmentation masks $\{g_{i}\}_{i=1}^{I}$. Our goal is to learn a data-driven operator $H(f)$ such that for any image $f$ with properties similar to those in the training set, the steady state of \eqref{original} closely approximates its segmentation mask $g$. Let $\Theta$ denote the set of all parameters of $H(f)$ to be learned from the data. These parameters are determined by solving
\begin{equation}\notag
	\min\limits_{\Theta}\frac{1}{I}\sum\limits_{i=1}^{I}\ell(u(x,T;\Theta,f_{i}),g_{i}),
\end{equation}
where $\ell(\cdot,\cdot)$ is a loss function, such as Hinge loss, Logistic loss, or Binary Cross Entropy (BCE) loss, which measures the differences between its arguments \cite{Loss2004}.

\section{Experimental results}\label{experiments}
This section is dedicated to assessing the segmentation performance of our proposed method and current state-of-the-art segmentation models with a roughly equivalent number of parameters. UNet \cite{Ronneberger2015} is a classic encoder-decoder architecture for medical image segmentation. Its defining feature is the use of skip connections that fuse high-resolution feature maps from the encoder with the upsampled feature maps in the decoder, a design that effectively preserves precise spatial information. UNet++ \cite{Zhou2018} redesigns the skip pathways of the standard UNet by introducing densely connected convolutional blocks and nested skip connections. This architecture aims to bridge the semantic gap between the encoder and decoder features, which leads to more accurate and refined segmentation outputs. SwinUNet \cite{Hu2022} is a purely transformer-based segmentation model built upon the swin transformer. It utilizes a shifted windows mechanism for self-attention to efficiently model long-range dependencies and capture global contextual information, which demonstrates powerful capabilities of transformers in visual tasks. Based on the Potts model and the operator splitting method, Double-well Net (DN) \cite{Liu2024} is a mathematically grounded deep learning framework that incorporates a double-well potential function to drive image segmentation. It leverages a UNet-style architecture to learn region force terms in a data-driven manner, which achieves high accuracy with fewer parameters compared to conventional segmentation networks. The objective of all these methods is to enhance the alignment of segmentation predictions with their corresponding ground truth labels. Our approach leverages advantages of both variational models and deep learning. A detailed comparison of the parameters among these deep learning methods and our model is provided in Table~\ref{VMEDMC parameter}.
\begin{table}[htbp]
	\caption{Comparison of the number of parameters of DN, ours with UNet, UNet++, Swin-UNet, and TransUNet, SAM in this study (M: Represents one million parameters).}
	\label{VMEDMC parameter}
	\centering
	\begin{tabular}{ccccccc}
		\toprule
		\multicolumn{7}{c}{Number of parameters} \\
		\midrule
		DN & Ours & UNet & UNet++ & Swin-UNet & TransUNet & SAM \\
		7.7M & 19.8M & 30.2M & 35.6M & 33.9M & 105.1M & \textgreater{} 600.0M \\
		\bottomrule
	\end{tabular}
\end{table}
The parameter count of our model is comparable to that of these methods. Therefore, for the assessment of segmentation accuracy, UNet, UNet++, Swin-UNet, and DN form the basis of our comparative evaluation. 

To gauge the effectiveness of various segmentation techniques in measurable terms, three widely-adopted performance indicators, each relying on comparison to a ground truth reference, are employed. To holistically assess segmentation quality, two complementary classes of metrics are employed, region-based and boundary-based measures. Region accuracy is quantified using Dice and Jaccard indices, where superior performance is reflected in higher scores \cite{Bertels2019}. Conversely, boundary fidelity is evaluated using 95th percentile Hausdorff Distance (HD95), a metric for which lower values indicate finer alignment with the reference contour \cite{Taha2015}.

\subsection{Implementation}
This segment outlines the specific configurations and dataset characteristics employed in the implementation of the proposed model.

First, during the training process, we use an Adam optimizer with learning rate $10^{-4}$, BCE loss and batch size $8$ for all experiments of this paper. Moreover, in $F$ block whose block numbers are set $5$, we assign $\varepsilon=\lambda=\beta=\mu=1$, $\Delta t=0.1$, $tol=10^{-5}$ and $n_{max}=10$ for each $B_F$ block. Adopting the same channel configuration as DN, we define $\boldsymbol{c}$ in $H(f)$ as $[128,128,128,128,256]$. For $T$ block whose block number are set $5$, we assign $\varepsilon=\lambda=1$, $tol=10^{-5}$, $\tau=0.02$ and the maximum number of iterations are $5, 10, 15, 20,25$ for BiCGSTAB of each $B_T$ block, respectively. We neither need nor wish to devote excessive effort to parameter tuning, as FSD preprocessing already effectively reduces noise and enhances boundary clarity, while TFPM scheme achieves desirable results without requiring full convergence. To ensure a fair and authoritative comparison, all models were evaluated under their default or near-optimal configurations to maximize their performance, with a constraint that the total parameter count for each method should be maintained at a comparable order of magnitude. All experimental computations were executed on an Ubuntu 24.04 system within a Python 3.12 environment utilizing PyTorch framework version 2.8.0, with all neural network operations accelerated on an NVIDIA 5090 GPU.

In this study, all employed datasets were synthetically corrupted by adding zero-mean Gaussian noise with varying standard deviations $\sigma$:
\begin{itemize}
	\item ECSSD comprises 1000 semantically annotated images characterized by complex backgrounds and pixel-wise manual annotations. All samples were resized to 256×256 resolution and trained for 600 epochs, with a partition of 600 images allocated for training and the remaining 400 reserved for testing \cite{Shi2016}.
	\item HKU-IS is designed for visual saliency prediction and comprises 4447 challenging images, most exhibiting either low contrast or containing multiple salient objects. For our experiments, all images were resized to 256×256 resolution and trained for 800 epochs, with 2840 samples used for training and 1607 for testing \cite{Li2015}.
	\item DUT-OMRON dataset consists of 5168 high-quality natural images, each featuring one or more salient objects against diverse and cluttered backgrounds. All images were resized to a resolution of 256×256 and trained for 800 epochs, with 3328 images allocated for training and 1840 for testing \cite{Yang2013}.
\end{itemize}

\subsection{Test on datasets}
This section presents a comparative evaluation of multiple segmentation approaches applied across three distinct benchmark datasets. Figure~\ref{ECSSD SD0.3}~to~Figure~\ref{HKUIS SD0.7} primarily demonstrate segmentation performance on single objects, whereas Figure~\ref{HKUIS SD0.5 1}~to~Figure~\ref{DUT SD0.7} focus on scenarios containing multiple objects.

In Figure\ref{ECSSD SD0.3}, UNet and UNet++ achieve roughly correct object localization but yield somewhat blurry and fragmented boundaries, resulting in partial loss of foreground areas; DN maintains relatively better target completeness, though boundary precision remains limited; both Swin-UNet and our method show improved overall performance compared to the above approaches, while Swin-UNet appears to slightly over-smooth certain edges, which may contribute to minor adhesion and less precise delineation in regions such as the bird's beak. 

\begin{figure}[htbp]
	\centering
	\includegraphics[width=1\linewidth]{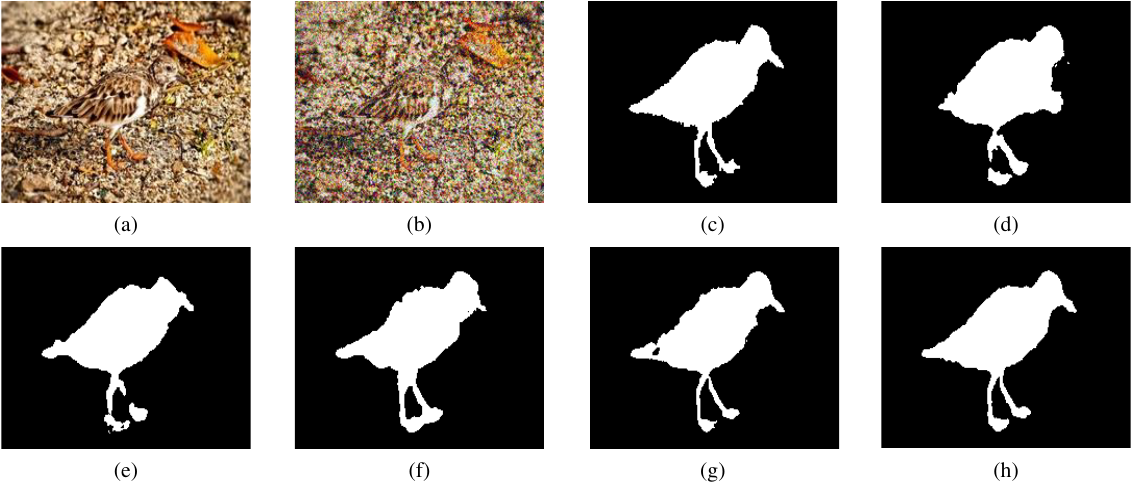}
	\caption{Results of single object of ECSSD (Gaussian noise $\sigma = 0.3$). (a) Original image; (b) Noisy image; (c) Ground Truth; (d) UNet; (e) UNet++; (f) Swin-UNet; (g) DN; (h) Ours.}
	\label{ECSSD SD0.3}
\end{figure}

As shown in Figure~\ref{ECSSD SD0.7}, while UNet, UNet++, and DN maintain approximate object localization, they exhibit noticeable boundary blur, fragmentation, and part adhesion under high-intensity noise interference. In comparison, both Swin-UNet and our method achieve competitive performance. Swin-UNet shows slight local adhesion and some inaccuracy in representing the curved stem of the leftmost glass, while our method appears to mitigate these issues.

\begin{figure}[htbp]
	\centering
	\includegraphics[width=1\linewidth]{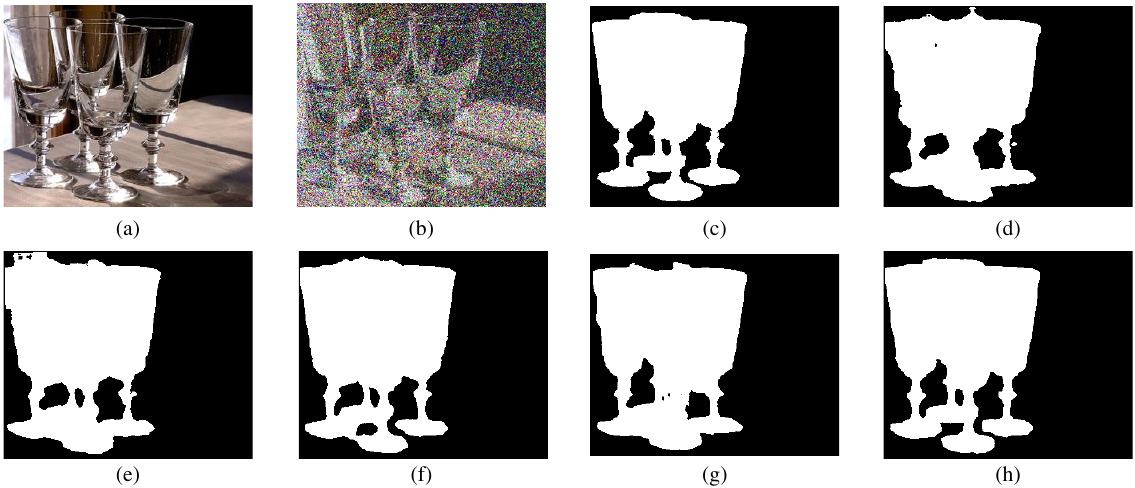}
	\caption{Results of single object of ECSSD (Gaussian noise $\sigma = 0.7$). (a) Original image; (b) Noisy image; (c) Ground Truth; (d) UNet; (e) UNet++; (f) Swin-UNet; (g) DN; (h) Ours.}
	\label{ECSSD SD0.7}
\end{figure}

On both images Figure~\ref{HKUIS SD0.3} and Figure~\ref{HKUIS SD0.7}, all five methods successfully reconstructed the general shapes of the objects. It can be observed that DN and Swin-UNet capture more fine-grained details compared to UNet and UNet++, while ours appears to achieve slightly better performance than DN and Swin-UNet. Specifically, in Figure~\ref{HKUIS SD0.3}, UNet and UNet++ did not fully represent the uneven texture of the pineapple’s surface. In Figure~\ref{HKUIS SD0.7}, these two methods seem to have been affected by branches similar in color to the snail above it and strong noise, resulting in some extraneous regions in the segmentation. DN and Swin-UNet managed to recover more details in Figure~\ref{HKUIS SD0.3}, though still somewhat less completely than our method. Additionally, in Figure~\ref{HKUIS SD0.7}, the snail’s shell and antennae recovered by these two methods are somewhat less distinct.

\begin{figure}[htbp]
	\centering
	\includegraphics[width=1\linewidth]{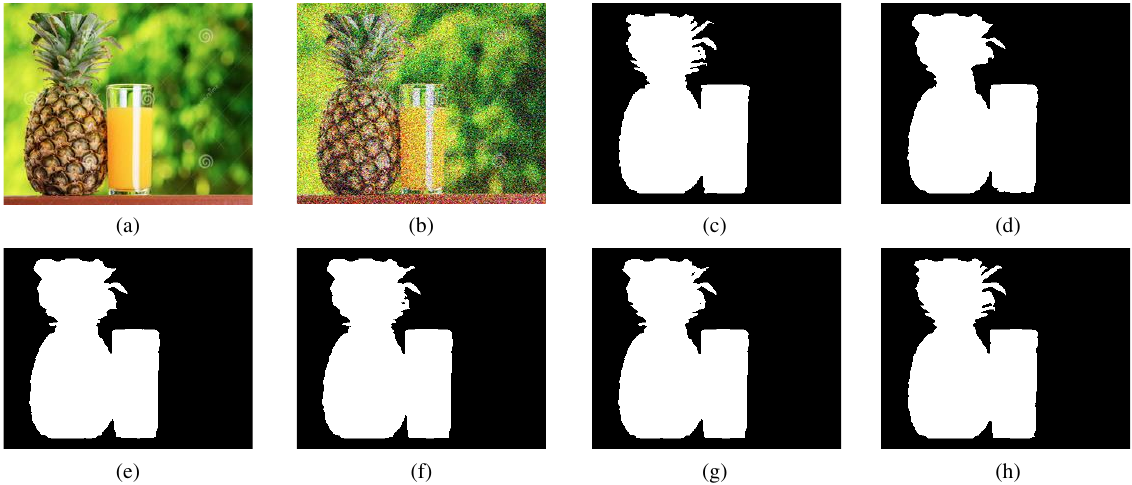}
	\caption{Results of single object of HKU-IS (Gaussian noise $\sigma = 0.3$). (a) Original image; (b) Noisy image; (c) Ground Truth; (d) UNet; (e) UNet++; (f) Swin-UNet; (g) DN; (h) Ours.}
	\label{HKUIS SD0.3}
\end{figure}

\begin{figure}[htbp]
	\centering
	\includegraphics[width=1\linewidth]{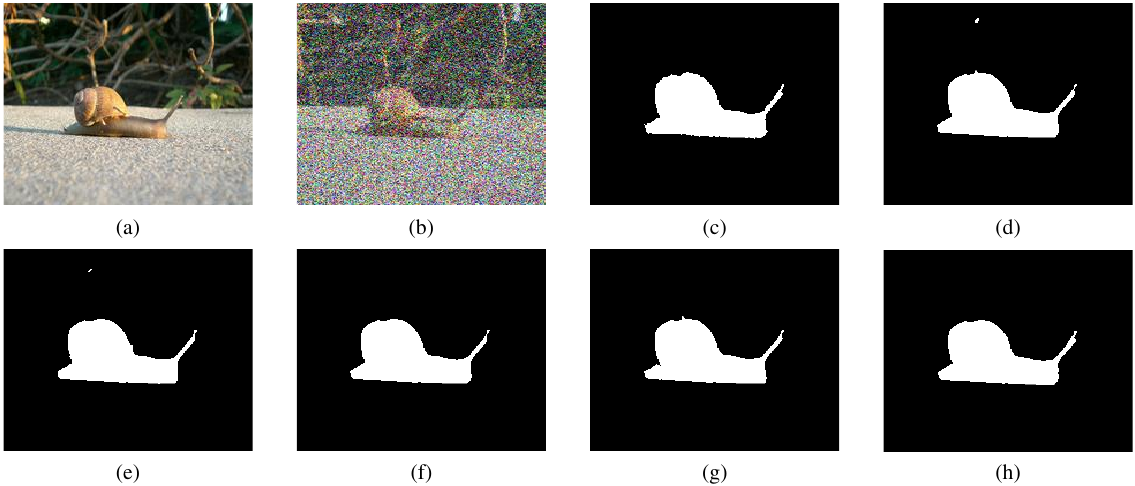}
	\caption{Results of single object of HKU-IS (Gaussian noise $\sigma = 0.7$). (a) Original image; (b) Noisy image; (c) Ground Truth; (d) UNet; (e) UNet++; (f) Swin-UNet; (g) DN; (h) Ours.}
	\label{HKUIS SD0.7}
\end{figure}

In Figure~\ref{HKUIS SD0.5 1}, it can be observed that the segmentation results of UNet and UNet++ appear to be affected by lake reflections and moderate Gaussian noise, leading to some extraneous regions. Compared to Swin-UNet and our method, DN shows slightly less accuracy in capturing fine edge details. Additionally, the footwebbing of the duck in the lower part of the image is not clearly represented in the result produced by Swin-UNet. In Figure~\ref{HKUIS SD0.5 2}, it can be observed that the results of UNet and UNet++ exhibit certain missing regions to varying degrees, likely due to noise interference, while DN shows some extraneous parts in its segmentation. Although both Swin-UNet and our method achieve reasonably complete recovery overall, our approach appears to capture the structure of the dolphin’s tail fluke and rostrum with slightly greater accuracy and fidelity.

\begin{figure}[htbp]
	\centering
	\includegraphics[width=1\linewidth]{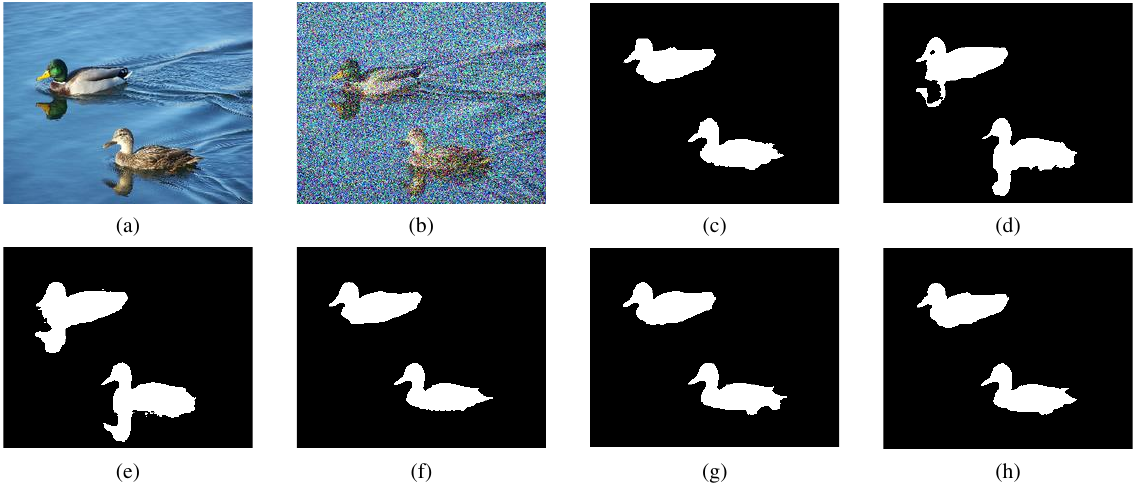}
	\caption{Results of multiple objects of HKU-IS (Gaussian noise $\sigma = 0.5$). (a) Original image; (b) Noisy image; (c) Ground Truth; (d) UNet; (e) UNet++; (f) Swin-UNet; (g) DN; (h) Ours.}
	\label{HKUIS SD0.5 1}
\end{figure}

\begin{figure}[htbp]
	\centering
	\includegraphics[width=1\linewidth]{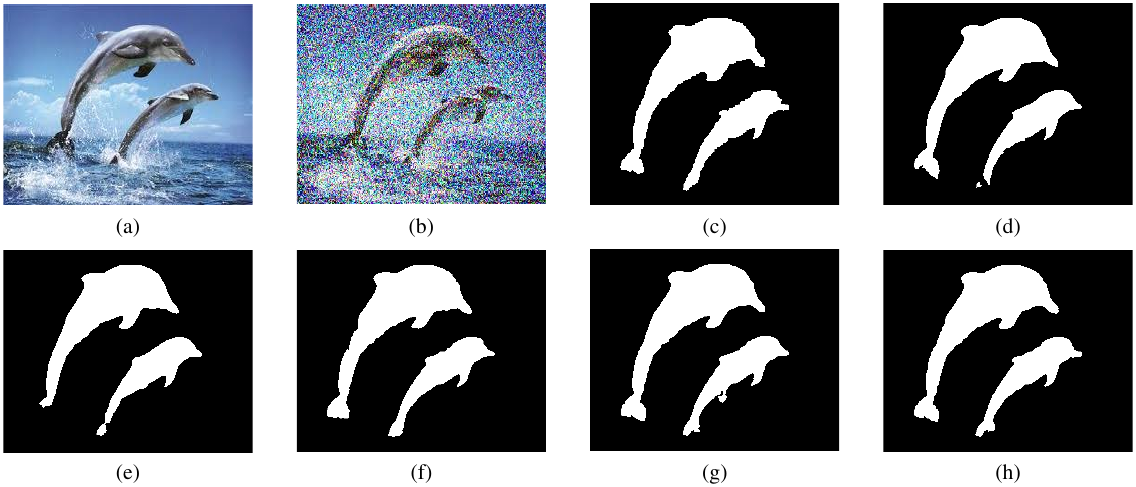}
	\caption{Results of multiple objects of HKU-IS (Gaussian noise $\sigma = 0.5$). (a) Original image; (b) Noisy image; (c) Ground Truth; (d) UNet; (e) UNet++; (f) Swin-UNet; (g) DN; (h) Ours.}
	\label{HKUIS SD0.5 2}
\end{figure}

For the results of DUT-OMRON, in Figure~\ref{DUT SD0.3}, it can be observed that the segmentation output of UNet omits the two leftmost figures and, together with UNet++, introduces some extraneous point-like artifacts. Additionally, in the case of DN, the segmentation of the rightmost figure appears to be influenced by the color similarity between the subject’s white hat and the snowy background, resulting in a partially missing hat region. Meanwhile, both Swin-UNet and our method demonstrate a degree of robustness to such interference, particularly in preserving finer details, although there remains room for further improvement. In Figure~\ref{DUT SD0.7}, it can be observed that the segmentation results of UNet and UNet++ exhibit extraneous artifacts around the head regions of the figures and completely fail to capture the outstretched hand of the figure on the right hanging outside the railing. A similar issue is present in the result of DN, which only recovers a small portion of this hand region. In comparison, both Swin-UNet and the proposed method achieve more complete and accurate segmentation overall. However, minor imperfections remain: Swin-UNet shows slight inaccuracies in some hand details, while our method exhibits subtle artifacts around the head region of the left figure.

\begin{figure}[htbp]
	\centering
	\includegraphics[width=1\linewidth]{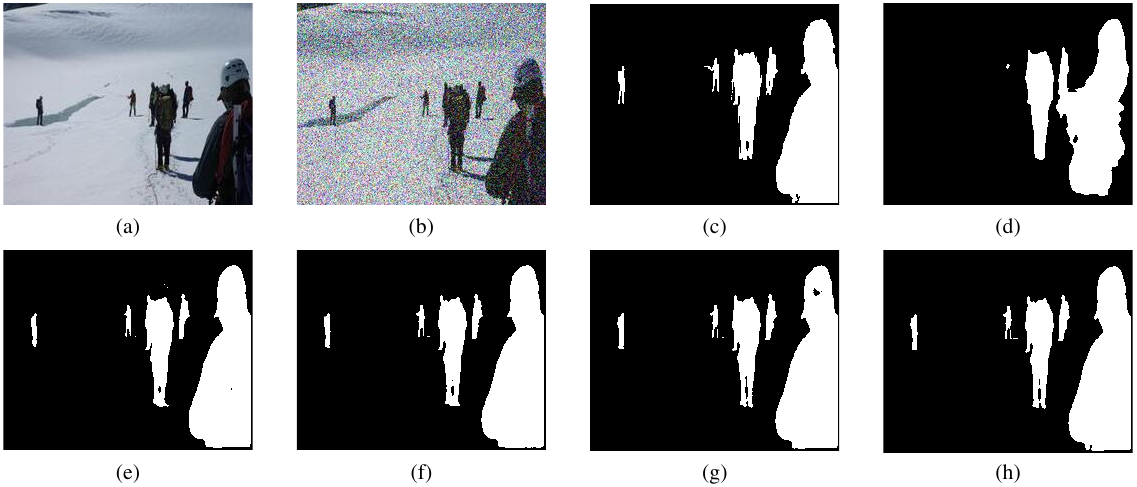}
	\caption{Results of multiple objects of DUT-OMRON (Gaussian noise $\sigma = 0.3$). (a) Original image; (b) Noisy image; (c) Ground Truth; (d) UNet; (e) UNet++; (f) Swin-UNet; (g) DN; (h) Ours.}
	\label{DUT SD0.3}
\end{figure}

\begin{figure}[htbp]
	\centering
	\includegraphics[width=1\linewidth]{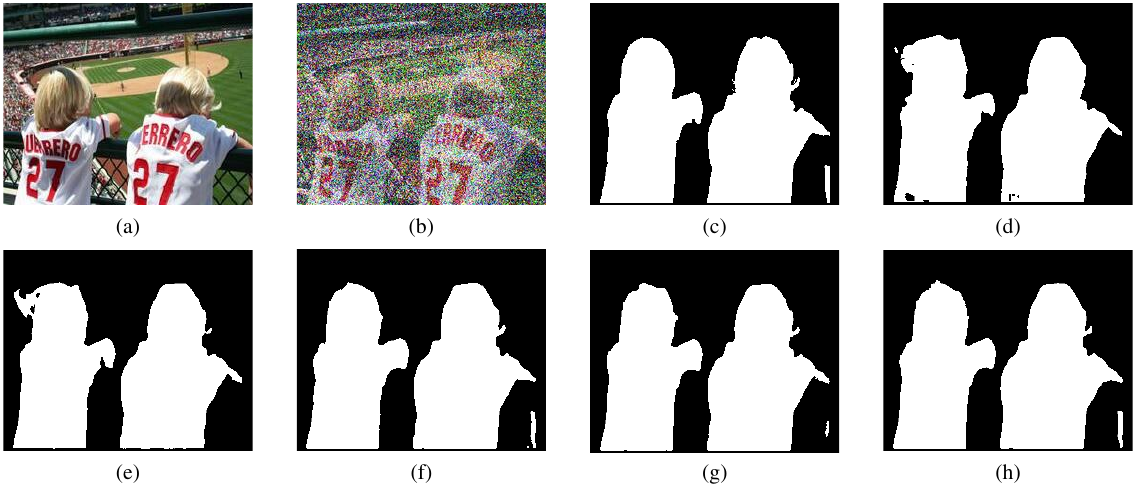}
	\caption{Results of multiple objects of DUT-OMRON (Gaussian noise $\sigma = 0.7$). (a) Original image; (b) Noisy image; (c) Ground Truth; (d) UNet; (e) UNet++; (f) Swin-UNet; (g) DN; (h) Ours.}
	\label{DUT SD0.7}
\end{figure}

\subsection{Quantitative results}
In Table~\ref{index}, we report the average quantitative results of various methods across three datasets under Gaussian noise corruption with noise intensity $\sigma=0.5$. The results suggest that the proposed method performs competitively in most scenarios, achieving comparatively higher Dice and Jaccard scores, along with a lower HD95, which may indicate improved region consistency and boundary stability under noisy conditions. Although Swin-UNet achieves competitive Dice and Jaccard scores on the DUT-OMRON dataset, its performance in boundary regions remains slightly less accurate compared to our method. Swin-UNet and DN also demonstrate solid performance in certain settings, though the proposed method appears to offer modest advantages in retaining fine structural details when subjected to noise. Together with observations from prior visual results, the proposed approach shows promising generalization ability and noise tolerance compared to several current models, suggesting its potential applicability in semantic segmentation tasks where noise interference is a concern. Its consistent performance in both regional overlap and boundary quality under impaired image conditions may support its use in real-world environments where image quality can vary.

\begin{table}
	\caption{Comparison of Dice, Jaccard and HD95 of ours with UNet, UNet++, Swin-UNet, and DN on ECSSD, HKU-IS and DUT-OMRON with Gaussian noise $\sigma=0.5$. $\uparrow$ means the higher the better; $\downarrow$ means the lower the better.}
	\label{index}
	\centering
	\resizebox{0.98\textwidth}{!}{
		\begin{tabular}{l*{10}{c}}
			\toprule
			&\multicolumn{3}{c}{ECSSD}  &
			\multicolumn{3}{c}{HKU-IS}&
			\multicolumn{3}{c}{DUT-OMRON}\\
			\cmidrule(r){2-4}  \cmidrule(r){5-7} \cmidrule(r){8-10} &Dice $\uparrow$&Jaccard  $\uparrow$&HD95 $\downarrow$&Dice $\uparrow$&Jaccard $\uparrow$ &HD95 $\downarrow$ &Dice $\uparrow$&Jaccard $\uparrow$&HD95 $\downarrow$\\
			\midrule
			\multicolumn{1}{c}{UNet}& \multicolumn{1}{c}{0.873$\pm$0.004}& \multicolumn{1}{c}{0.775$\pm$0.001}&
			\multicolumn{1}{c}{0.897$\pm$0.005}&
			\multicolumn{1}{c}{0.881$\pm$0.001}&
			\multicolumn{1}{c}{0.788$\pm$0.001}&
			\multicolumn{1}{c}{1.602$\pm$0.012}&
			\multicolumn{1}{c}{0.868$\pm$0.003}&
			\multicolumn{1}{c}{0.768$\pm$0.001}&
			\multicolumn{1}{c}{5.431$\pm$0.020}
			\\ 
			\multicolumn{1}{c}{UNet++}& \multicolumn{1}{c}{0.880$\pm$0.003}& \multicolumn{1}{c}{0.787$\pm$0.002}&
			\multicolumn{1}{c}{0.788$\pm$0.002}&
			\multicolumn{1}{c}{0.890$\pm$0.001}&
			\multicolumn{1}{c}{0.802$\pm$0.002}&
			\multicolumn{1}{c}{1.474$\pm$0.015}&
			\multicolumn{1}{c}{0.870$\pm$0.001}&
			\multicolumn{1}{c}{0.771$\pm$0.002}&
			\multicolumn{1}{c}{4.782$\pm$0.150}
			\\ 	
			\multicolumn{1}{c}{Swin-UNet}& \multicolumn{1}{c}{0.910$\pm$0.002}& \multicolumn{1}{c}{0.835$\pm$0.001}& 
			\multicolumn{1}{c}{0.498$\pm$0.002}&
			\multicolumn{1}{c}{0.906$\pm$0.001}&
			\multicolumn{1}{c}{0.829$\pm$0.003}&
			\multicolumn{1}{c}{1.001$\pm$0.007}&
			\multicolumn{1}{c}{\textbf{0.905}$\pm$0.001}&
			\multicolumn{1}{c}{\textbf{0.827}$\pm$0.004}&
			\multicolumn{1}{c}{2.451$\pm$0.020}
			\\ 
			\multicolumn{1}{c}{DN}& \multicolumn{1}{c}{0.896$\pm$0.002}& \multicolumn{1}{c}{0.812$\pm$0.002}&
			\multicolumn{1}{c}{0.563$\pm$0.001}&
			\multicolumn{1}{c}{0.897$\pm$0.002}&
			\multicolumn{1}{c}{0.813$\pm$0.002}&
			\multicolumn{1}{c}{1.287$\pm$0.010}&
			\multicolumn{1}{c}{0.885$\pm$0.001}&
			\multicolumn{1}{c}{0.794$\pm$0.002}&
			\multicolumn{1}{c}{3.239$\pm$0.030}
			\\
			\multicolumn{1}{c}{Ours}& \multicolumn{1}{c}{\textbf{0.919}$\pm$0.003}& \multicolumn{1}{c}{\textbf{0.851}$\pm$0.004}&
			\multicolumn{1}{c}{\textbf{0.432}$\pm$0.001}&
			\multicolumn{1}{c}{\textbf{0.910}$\pm$0.003}&
			\multicolumn{1}{c}{\textbf{0.835}$\pm$0.002}&
			\multicolumn{1}{c}{\textbf{0.989}$\pm$0.004}&
			\multicolumn{1}{c}{0.902$\pm$0.001}&
			\multicolumn{1}{c}{0.821$\pm$0.002}&
			\multicolumn{1}{c}{\textbf{2.364}$\pm$0.010}
			\\ 
			\bottomrule
	\end{tabular}}
\end{table}

As shown in Table~\ref{time}, the proposed method achieves reasonably competitive efficiency despite its increased architectural complexity. Our approach incorporates an $F$ module and a $T$ module, and further involves solving high-dimensional linear systems via the BiCGSTAB method—operations that typically introduce notable computational overhead. Nevertheless, the observed running time remains within a comparable order of magnitude to several current models. Specifically, our method requires 23.46 seconds per epoch, which is higher than UNet (5.55s) and UNet++ (7.36s), yet it remains relatively close to transformer-based Swin-UNet (10.87s) and DN (11.05s). These results suggest that the introduced modules do not excessively impair computational efficiency, while contributing to improved model capability. Thus, the computational cost of our approach appears acceptable, offering a favorable trade-off between model expressiveness and inference time. It performs noticeably better than UNet and UNet++, is marginally better than DN, and remains competitive with Swin-UNet.

\begin{table}
	\caption{Comparison of running time of UNet, UNet++, Swin-UNet, DN, and ours on ECSSD with Gaussian noise $\sigma = 0.5$ (s: Represents seconds per epoch).}
	\label{time}
	\centering
	\begin{tabular}{ccccc}
		\toprule
		\multicolumn{5}{c}{Running time per epoch on ECSSD} \\
		\midrule
		UNet&UNet++&Swin-UNet&DN&Ours \\
		5.54s&7.36s&10.87s&11.05s&23.46s\\
		\bottomrule
	\end{tabular}
\end{table}

\section{Conclusions}\label{conclusions}
Numerical evaluations under different levels of Gaussian noise indicate that our method may achieve modest improvements in edge sharpness, detail retention, and overall segmentation accuracy compared to contemporary CNN-based approaches, while performing comparably to transform-based techniques. It is worth noting that our model appears to maintain encouraging robustness and adaptability, even under high noise conditions. When compared with conventional neural network designs, the robust VM\_TUNet still maintains relatively moderate parameter size and computational requirements, which we hope may support its use in limited resource environments and for images with broken, blurry, or slender boundaries. Relative to widely used neural network models such as UNet, UNet++, and Swin-UNet, our approach seeks to offer somewhat better interpretability grounded in mathematical and physical insights. In comparison with hybrid frameworks like DN, the adoption of higher-order modeling may assist in preserving boundary shapes more faithfully. Future work might explore extending this model to more challenging tasks, such as instance segmentation and 3D medical image processing, while also striving to develop a more systematic theoretical understanding of its broader properties. We hope these efforts can eventually provide more practical segmentation solutions and theoretical perspectives for real-world industrial and medical applications.

\newpage
\bibliographystyle{abbrv}
\bibliography{references} 

\end{document}